%%%%%%%%%%%%%%%%%%%%%%%%%%%%%%%%%%%%%%%%%%%%%%%%%%%%%%%%%%%%%%%%%%%%%%%%%%%%%%%%
%2345678901234567890123456789012345678901234567890123456789012345678901234567890
%        1         2         3         4         5         6         7         8

% Qiaojun's note: we don't have IEEEconf.cls, do we?
% I found some version online but not sure it is what we need
\documentclass[letterpaper, 10 pt, conference]{cls/ieeeconf}
\IEEEoverridecommandlockouts
\overrideIEEEmargins

% Space Adjustments
\addtolength{\textfloatsep}{-2mm} % reduce space between figs & text
\addtolength{\intextsep}{-2mm}	   % reduce space between intext fig & text
%\addtolength{\floatsep}{-5mm}		 % reduce the space between figures
% These are specific to caption package:

% Packages
\usepackage[normalem]{ulem}	                        % underlining!
\usepackage[table,usenames,dvipsnames]{xcolor}      % color
\usepackage{enumitem}                               % [inline]
\usepackage{extarrows}                              % http://ctan.org/pkg
\usepackage[noadjust]{cite}                         % Citation
%\usepackage[comma]{natbib}
%\usepackage[numbers]{natbib}

% Math
\usepackage{amsmath,amssymb,amsfonts,amsthm,dsfont} % math
\usepackage{algorithm,algorithmicx,listings}        % algorithms
\usepackage[noend]{algpseudocode}			        % necessary for algorithmicx
%\usepackage{mathtools,thmtools}

% Figures
\usepackage{graphicx,tabularx,subcaption}
\usepackage[export]{adjustbox}% http://ctan.org/pkg/adjustbox
\usepackage{multirow,multicol,array,rotating,diagbox}

\usepackage[font={small}]{caption}
\captionsetup[algorithm]{font=small}
\setlength{\belowcaptionskip}{-3.5pt}
\setlength{\abovecaptionskip}{3pt}
%\usepackage[font={small,sf},labelfont=bf,format=hang,format=plain,margin=0pt,width=0.8\textwidth]{caption}
%\usepackage[font={small}]{subcaption} % list=true

% Appendix stuff

\usepackage[titletoc,title]{appendix}

% Always import last
\usepackage[breaklinks=true, colorlinks, bookmarks=true, citecolor=Black, urlcolor=Violet,linkcolor=Black]{hyperref}

% Commands

% #1 = top; #2 = bottom; #3 = inequality (<,>,\leq,\geq)

\DeclareMathOperator{\tr}{\mathbf{tr}}

\newcommand{\scaleMathLine}[2][1]{\resizebox{#1\linewidth}{!}{$\displaystyle{#2}$}}
\newcommand{\prl}[1]{\left(#1\right)}
\newcommand{\brl}[1]{\left[#1\right]}

\newcommand{\myhat}[1]{\lfloor#1\times\rfloor}

% Environments:

\newtheorem{proposition}{Proposition}

\theoremstyle{definition}

\newtheorem*{assumption*}{Assumption}
\newtheorem*{problem*}{Problem}

\theoremstyle{remark}

\newtheorem*{solution*}{Solution}

% Meta data
\def\thetitle{Localization and Mapping using Instance-specific Mesh Models}
\def\theauthor{Qiaojun Feng, Yue Meng, Mo Shan, Nikolay Atanasov}
\def\thekeywords{deformable mesh, Semantic Localization and Mapping, multi-view geometry}

\hypersetup{
  pdfauthor={\theauthor},%
  pdftitle={\thetitle},%
  pdfsubject={\thetitle},%
  pdfkeywords={\thekeywords}
}

\IEEEoverridecommandlockouts

\title{\LARGE \bf \thetitle}

\author{Qiaojun Feng $\quad$ Yue Meng $\quad$ Mo Shan $\quad$ Nikolay Atanasov% <-this % stops a space
\thanks{We gratefully acknowledge support from ARL DCIST CRA W911NF-17-2-0181 and ONR N00014-18-1-2828.}%
\thanks{The authors are with Department of Electrical and Computer Engineering, University of California, San Diego, La Jolla, CA 92093, USA {\tt\small \{qjfeng,yum107,moshan,natanasov\}@ucsd.edu}}% <-this % stops a space
}
%  and NSF CRII RI IIS-1755568

%==================================================================%
\begin{document}
\maketitle

\begin{abstract}
This paper focuses on building semantic maps, containing object poses and shapes, using a monocular camera. This is an important problem because robots need rich understanding of geometry and context if they are to shape the future of transportation, construction, and agriculture. Our contribution is an instance-specific mesh model of object shape that can be optimized online based on semantic information extracted from camera images. Multi-view constraints on the object shape are obtained by detecting objects and extracting category-specific keypoints and segmentation masks. We show that the errors between projections of the mesh model and the observed keypoints and masks can be differentiated in order to obtain accurate instance-specific object shapes. We evaluate the performance of the proposed approach in simulation and on the KITTI dataset by building maps of car poses and shapes.
\end{abstract}

% In robotics area Simultaneous Localization and Mapping (SLAM) is a fundamental task that has been well studied. Other than feature point methods that focus more on localization task, recent approaches emphasis on extraction and representation of geometric and semantic information from the environment. In this work, we extract the semantic keypoints and segmentation masks of specific objects and reconstruct the 3D mesh model. The advancement of 3D machine learning approaches can overcome the limits of dependence on existing CAD model, but learn instance-specific 3D mesh reconstruction. After prediction step we can perform further optimization using multi-view observation constraints. Finally we show that such representation and corresponding observation can be  seamlessly integrated into SLAM framework.

\section{Introduction}
\label{sec:introduction}

Autonomous robots bring compelling promises of revolutionizing many aspects of our lives, including transportation, agriculture, mining, construction, security, and environmental monitoring. Transitioning robotic systems from highly structured factory environments to partially known, dynamically changing operational conditions, however, requires perceptual capabilities and contextual reasoning that rival those of biological systems. The foundations of artificial perception lie in the twin technologies of inferring geometry (e.g., occupancy mapping) and semantic content (e.g., scene and object recognition). Visual-inertial odometry (VIO)~\cite{msckf,rovio,vinsmono} and Simultaneous Localization And Mapping (SLAM)~\cite{cadena2016past} are approaches capable of tracking the pose of a robotic system while simultaneously reconstructing a sparse~\cite{lsd-slam,orbslam2} or dense~\cite{Newcombe_DenseVisualSLAM_Phd14,elastic_fusion} geometric representation of the environment. Current VIO and SLAM techniques achieve impressive performance, yet most rely on low-level geometric features such as points~\cite{lowe2004distinctive,rublee2011orb} and planes~\cite{Kaess15icra,quadric_slam} and are unable to extract semantic content. Computer vision techniques based on deep learning recently emerge as a potentially revolutionary way for context comprehension. A major research challenge today is to exploit information provided by deep learning, such as category-specific object keypoints~\cite{newell2016stacked,pavlakos20176}, semantic edges~\cite{casenet}, and segmentation masks~\cite{he2018mask}, in VIO and SLAM algorithms to build rich models of the shape, structure, and function of objects.

This paper addresses camera localization and object-level mapping, incorporating object categories, poses, and shapes. Our main \textbf{contribution} is the development of an instance-specific object shape model based on a triangular mesh and differentiable functions that measure the discrepancy in the image plane between projections of the model and detected semantic information. We utilize semantic keypoints~\cite{newell2016stacked,pavlakos20176,zhou2018starmap} and segmentation masks~\cite{he2018mask} trained on open-source datasets~\cite{xiang_wacv14} as observations for optimizing the error functions. Initialized from a pre-defined mean category-level model, the optimization steps are inspired by the recently proposed differentiable mesh renderer~\cite{kato2018neural}, which allows back-propagation of mask errors measured on a rendered image to update the mesh vertices. We generalize this approach to full $SE(3)$ camera and object pose representations and allow multi-view observation constraints and multi-object reconstruction. The pixel-level information from the segmentation masks is used to incrementally refine the instance-specific object models, which are significantly more accurate than generic category-level ones.

\begin{figure}[t]
  \centering
  \includegraphics[width=\linewidth,trim=0mm 0mm 00mm 00mm, clip]{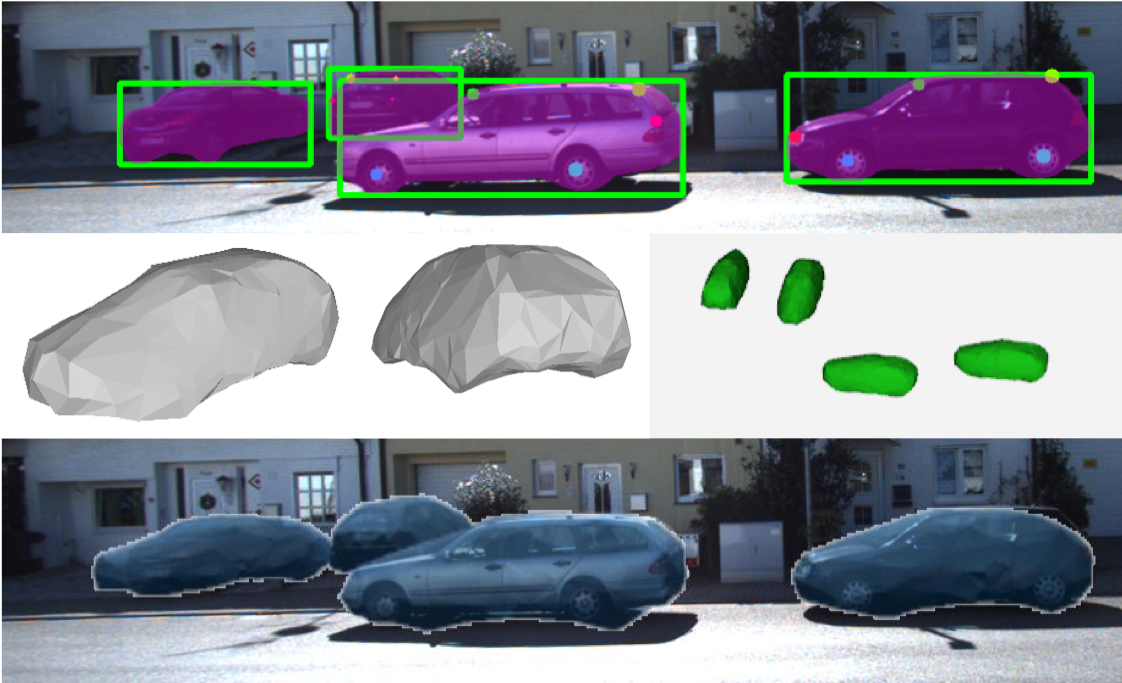}
  \caption{Our objective is to build detailed environment maps incorporating object poses and shapes. The figure from KITTI~\cite{geiger2013vision} in the top row shows the kind of information that our method relies on: bounding boxes (green), segmentation masks (magenta) and semantic keypoints (multiple colors). The middle row includes the reconstructed mesh models and 3D configuration. The last row shows the projection result.}
  \label{fig:obs}
\end{figure}

\section{Related Work}
\label{sec:related_work}

The problem of incorporating semantic information in SLAM has gained increasing attention in recent years~\cite{cadena2016past,Future_SLAM}. In an early approach \cite{civera2011towards}, objects are inserted in the map based on matching of feature descriptors to the models in a database, constructed offline using structure from motion. The camera trajectory provides multi-view information for object pose estimation but the object detections are not used as constraints to optimize the camera trajectory. Recent works often consider the optimization of object and camera poses simultaneously. SLAM++~\cite{salas2013slam++} optimizes the camera and object poses jointly using a factor graph and moreover reconstructs dense surface mesh models of pre-defined object categories. A limitation of this work is that the estimated object shapes are pre-defined and rigid instead of being optimized to match the specific instances detected online.

The popularity of joint optimization of camera and object poses keeps increasing with the advent of robust 2-D object detectors based on structured models~\cite{dpm} and deep neural networks~\cite{RCNN,resnet}. The stacked hourglass model~\cite{newell2016stacked} is used by several works~\cite{pavlakos20176,Atanasov_SemanticSLAM_IJCAI18} to extract mid-level object parts and, in turn, perform factor graph inference to recover the global positions and orientations of objects detected from a monocular camera. In~\cite{fei2018visual}, a deep network object detector is used to generate object hypotheses, which are subsequently validated using geometric and semantic cues and optimized via nonlinear filtering. Some of these approaches~\cite{semslam,Atanasov_SemanticSLAM_IJCAI18,fei2018visual} use inertial measurements and probabilistic data association among detections and objects as additional constraints in the optimization. While most approaches focus on static objects, \cite{li2018stereo} uses a stereo camera to track ego-motion and dynamic 3-D objects in urban driving scenarios. The authors use bundle adjustment to fuse 3-D object measurements obtained from detection and viewpoint classification.

%\subsection{Object Representation and Reconstruction}
Various 3D object representations, including pointclouds~\cite{fan2017point,tulsiani2017learning}, voxels~\cite{yan2016perspective,tulsiani2017multi}, meshes~\cite{kato2018neural,kanazawa2018learning}, and quadrics~\cite{nicholson2018quadricslam,quadric_slam}, have been proposed in the literature. We are particularly interested in object models that can be constrained from multi-view observations and can adapt their pose and shape to specific object instances observed online. Tulsiani, Kar, et al.~\cite{tulsiani2017learning} learn a deformable pointcloud model with mean shape and deformation bases to fit object silhouettes at test time. The perspective transformer nets~\cite{yan2016perspective} use perspective projection to synthesize observations from different views and can be utilized for multi-view shape optimization. Introducing object models into the online inference process of SLAM requires compact representations that can be optimized and stored efficiently. QuadricSLAM~\cite{nicholson2018quadricslam,quadric_slam} parameterizes objects using dual ellipsoids, which can be extracted directly from bounding box detections and optimized using few parameters. A triangular mesh model of object shapes is proposed by~\cite{kanazawa2018learning} and is optimized from a single image using object keypoints and silhouettes. The optimization proceses uses approximate gradients of a mesh rasterization function obtained via the neural mesh renderer~\cite{kato2018neural}. In this work, we generalize the deformable mesh model to $SE(3)$ camera and object poses and allow multi-view constraints and multi-object reconstruction.

\section{Problem Formulation}
\label{sec:problem_formulation}

We consider the problem of  detecting, localizing, and estimating the shape of object instances present in the scene, and estimating the pose of a camera over time. The states we are interested in estimating are the camera poses $\mathcal{C}\triangleq\{c_t\}_{t=1}^T$ with $c_t \in SE(3)$ and the object shapes and poses $\mathcal{O}\triangleq\{o_n\}_{n=1}^N$. More precisely, a camera pose $c_t := (R_{c_t},p_{c_t})$ is determined by its position $p_{c_t} \in \mathbb{R}^3$ and orientation $R_{c_t} \in SO(3)$, while an object state $o_n = (\mu_n,R_{o_n},p_{o_n})$ consists of a pose $R_{o_n} \in SO(3)$, $p_{o_n} \in \mathbb{R}^3$ and shape $\mu_n$, specified as a 3-D triangular mesh $\mu_n = (V_n,F_n)$ in the object canonical frame with vertices $V_n\in\mathbb{R}^{3\times|V_n|}$ and faces $F_n \in \mathbb{R}^{3 \times |F_n|}$. Each row of $F_n$ contains the indices of 3 vertices that form a triangular face. A subset of the mesh vertices are designated as keypoints -- distinguishing locations on an object's surface (e.g., car door, windshield, or tires) that may be detected using a camera sensor. We define a keypoint association matrix $A_n\in\mathbb{R}^{|V_n|\times|K_n|}$ that generates $|K_n|$ keypoints $K_n=V_n A_n$ from all mesh vertices.

Suppose that a sequence $\mathcal{I}\triangleq\{i_t\}^T_{t=1}$ of $T$ images $i_t \in \mathbb{R}^{W \times H}$, collected from the corresponding camera poses $\{c_t\}_{t=1}^T$, are available for the estimation task. From each image $i_t$, we extract a set of object observations $\mathcal{Z}_t\triangleq\{z_{lt} = (\xi_{lt},s_{lt},y_{lt})\}_{l=1}^{L_t}$, consisting of a detected object's category $\xi_{lt} \in \Xi$, a segmentation masks $s_{lt} \in \{0,1\}^{W \times H}$ and the pixel coordiantes of detected keypoints $y_{lt} \in \mathbb{R}^{2\times|K_{lt}|}$. We suppose that $\Xi$ is a finite set of pre-defined detectable object categories and that the data association $n = \pi_t(l)$ of observations to object instances is known (we describe an object tracking approach in Sec.~\ref{sec:approach_perception} but global data association can also be performed~\cite{jcbb,kaess_hungarian,semslam}). See Fig.~\ref{fig:obs} for example object observations.

% \begin{figure}[t]
%   \centering
%   \includegraphics[width=0.54\linewidth]{fig/graph.png}
%   \includegraphics[width=0.38\linewidth]{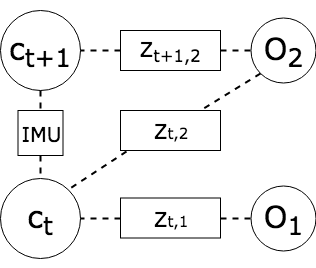}
%   \caption{The factor graph of two camera poses $c_t,c_{t+1}$ and two objects $o_1,o_2$, constrained by observation $z_{t,1},z_{t,2},z_{t+1,2}$ and IMU measurement}
%   \label{fig:factor_graph}
% \end{figure}

For a given estimate of the camera pose $\hat{c}_t$ and the object state $\hat{o}_n$, we can predict expected semantic mask $\hat{s}_{lt}$ and semantic keypoint observations $\hat{y}_{lt}$ using a perspective projection model:
\begin{equation}
\label{eq:projection_model}
\begin{aligned}
\hat{s}_{lt} &= \mathcal{R}_{\text{mask}}(\hat{c}_t,\hat{o}_n) \\
\hat{y}_{lt} &= \mathcal{R}_{\text{kps}}(\hat{c}_t,\hat{o}_n,A_n)
\end{aligned}
\end{equation}
where the mask and keypoint projection functions $\mathcal{R}_{\text{mask}}$, $\mathcal{R}_{\text{kps}}$ will be defined precisely in Sec.~\ref{sec:technical_approach}. The camera and object estimates can be optimized by reducing the error between the predicted $\hat{\mathcal{Z}}_{1:T}$ and the actual $\mathcal{Z}_{1:T}$ observations. We define loss functions $\mathcal{L}_{\text{mask}}$, measuring discrepancy between semantic masks, and $\mathcal{L}_{\text{kps}}$, measuring discrepancy among semantic keypoints, as follows:
\begin{equation}
\label{eq:loss_functions}
\begin{aligned}
\mathcal{L}_{\text{mask}}(s,\hat{s}) =& -\frac{\|s \odot \hat{s}\|_1}{\|s + \hat{s} - s \odot \hat{s}\|_1} \\
\mathcal{L}_{\text{kps}}(y,\hat{y}) =& \|y - \hat{y} \cdot vis(\hat{y}) \|_F^2
\end{aligned}
\end{equation}
where $\odot$ is an element-wise product and $vis(\hat{y}_{lt}) \in \{0,1\}^{|K_n|\times|K_{lt}|}$ is a binary selection matrix that discards unobservable object keypoints.
%and implicitly depends on $\hat{c}_t$ and the data association $n = \pi_t(l)$.

\begin{problem*}
Given object observations $\mathcal{Z}_{1:T}$, determine the camera poses $\mathcal{C}$ and object states $\mathcal{O}$ that minimize the mask and keypoint losses:
\begin{equation}
\label{eq:problem}
\begin{aligned}
\min_{\mathcal{C}, \mathcal{O}}\sum_{t=1}^{T} \sum_{l=1}^{L_t} &\left( w_{\text{mask}}\mathcal{L}_{\text{mask}}(s_{lt}, \mathcal{R_{\text{mask}}}(c_t,o_{\pi_t(l)})) \right.\\
&+ \left. w_{\text{kps}}\mathcal{L}_{\text{kps}}\prl{y_{lt},\mathcal{R}_{\text{kps}}(c_t,o_{\pi_t(l)},A_{\pi_t(l)})}\right)
\end{aligned}
\end{equation}
where $w_{\text{mask}}$, $w_{\text{kps}}$ are scalar weight parameters specifying the relative importance of the mask and keypoint loss functions. 
\end{problem*}

\section{Technical Approach}
\label{sec:technical_approach}

We begin by describing how the object observations $\mathcal{Z}_t$ are obtained. Next, we provide a rigorous definition of the perspective projection models in~\eqref{eq:projection_model}, which, in turn, define the loss functions in~\eqref{eq:loss_functions} precisely. Finally, in order to perform the optimization in~\eqref{eq:problem}, we derive the gradients of $\mathcal{L}_{\text{mask}}$ and $\mathcal{L}_{\text{kps}}$ with respect to $c_t$ and $o_t$.

\subsection{Semantic Perception}
\label{sec:approach_perception}

We extract both category-level (object category $\xi_{lt}$ and keypoints $y_{lt}$) and instance-level (segmentation masks $s_{lt}$) semantic information from the camera images. 
For each frame, we first use pre-trained model~\cite{massa2018mrcnn} to get object detection results represented with bounding boxes and instance segmentations inside the boxes. Each object is assigned to one of the class labels in $\Xi$. Then we extract semantic keypoints $y_{lt}$ within the bounding box of each detected object using the pre-trained stacked hourglass model of~\cite{zhou2018starmap}, which is widely used for human-joint/object-keypoint detector.
The $l$-th detection result at time $t$ contains the object category $\xi_{lt} \in \Xi$, keypoints $y_{lt} \in \mathbb{R}^{2 \times |K_{lt}|}$, mask $s_{lt} \in \{0,1\}^{W \times H}$, bounding box $\beta_{lt} \in \mathbb{R}^4$ (2-D location, width, and height) as shown in Fig.~\ref{fig:obs}, object detection confidence $u_{lt} \in \mathbb{R}$ and keypoint detection confidences $q_{lt} \in \mathbb{R}^{|K_{lt}|}$.

We develop an object tracking approach in order to associate the semantic observations obtained over time with the object instance that generated them. We extend the KLT-based ORB-feature-tracking method of~\cite{sun2018robust} to track semantic features $y_{lt}$ by accounting for their individual labels (e.g., car wheel, car door) and share object category $\xi_{lt}$. In detail, let $z_{lt}$ be a semantic observation from a newly detected object at time $t$. The objective is to determine if $z_{lt}$ matches any of the semantic observations $z_{m,t+1} \in \mathcal{Z}_{t+1}$ at time $t+1$ given that both have the same category label, i.e., $\xi_{lt} = \xi_{m,t+1}$. We apply the KLT optical flow algorithm~\cite{lucas1981iterative} to estimate the locations $y_{l,t+1}$ of the semantic features $y_{lt}$ in the image plane at time $t+1$. We use the segmentation mask $s_{m,t+1}$ of the $m$-th semantic observation to determine if $y_{l,t+1}$ are inliers (i.e., if the segmentation mask $s_{m,t+1}$ is $1$ at image location $y_{l,t+1}$) with respect to observation $m$. Let $in(y_{l,t+1},s_{m,t+1}) \in \{0,1\}^{|K_{lt}|}$ return a binary vector indicating whether each keypoint is an inlier or not. We repeat the process in reverse to determine if the backpropagated keypoints $y_{m,t}$ of observation $m$ are inliers with respect to observation $l$. Eventually, we compute a matching score based on the inliers and their detection confidences:
\begin{equation}
M_{lm} = \sum_{k=1}^{K_{lt}} in(y_{l,t+1}^{(k)},s_{m,t+1}^{(k)}) \cdot in(y_{m,t}^{(k)},s_{l,t}^{(k)}) \cdot q_{lt}^{(k)}
\end{equation}
where $q_{lt}^{(k)}$ is the $k$-th element of $q_{lt}$. Finally, we match observation $l$ to the observation at time $t+1$ that maximizes the score, i.e., $m^* = \arg\max_k M_{lm}$. If the object bounding boxes $\beta_{lt}$ and $\beta_{m^*,t+1}$ have compatible width and height, we declare that object $l$ has been successfully tracked to time $t+1$. Otherwise, we declare that object track $l$ has been lost.

\subsection{Mesh Renderer as an Observation Model}
\label{sec:approach_renderer}

Next, we develop the observation models $\mathcal{R}_{\text{mask}}$ and $\mathcal{R}_{\text{kps}}$ that specify how a semantic observations $z = (\xi,s,y)$ is generated by a camera with pose $(R_c,p_c) \in SE(3)$ observing an object of class $\xi \in \Xi$ with pose $(R_o,p_o) \in SE(3)$ and mesh shape $\mu = (V,F)$ with keypoint association matrix $A$. Let $K$ be the intrinsic matrix of the camera, defined as:
\begin{equation}
\label{eq:intrinsic_matrix}
K=\begin{bmatrix}f s_u&f s_n&c_u\\0&f s_v&c_v\end{bmatrix} \in \mathbb{R}^{2\times 3},
\end{equation}
where $f$ is the focal length in meters, $(s_u,s_v)$ is the pixels per meter resolution of the image array, $(c_u,c_v)$ is the image center in pixels and $s_n$ is a rectangular pixel scaling factor. 
Let $x := VAe_k \in \mathbb{R}^3$ be the coordinates of the $k$-th object keypoint in the object frame, where $e_k$ is a standard basis vector. The projection of $x$ onto the image frame can be determined by first projecting it from the object frame to the camera frame using $(R_o,p_o)$ and $(R_c,p_c)$ and then the perspective projection $\pi: \mathbb{R}^3 \rightarrow \mathbb{R}^3$, and the linear transformation $K$. In detail, this sequence of transformations leads to the pixel coordinates of $x$ as follows:
\begin{equation}
y^{(k)} = K\pi(R_c^T(R_ox + p_o-p_c)) \in \mathbb{R}^2
\end{equation}
where the standard perspective projection function is:
\begin{equation}
\pi(x) = \begin{bmatrix}x_1 / x_3&x_2 / x_3&x_3 / x_3\end{bmatrix}^T
\end{equation}
Applying the same transformation to all object keypoints $VA$ simultaneously leads to the keypoint projection model:
\begin{equation}
\label{eq:kps_proj}
\begin{aligned}
\mathcal{R}_{\text{cam}}(c,o,A) &:= R_c^T(R_oVA + (p_o-p_c) \mathbf{1}^T)\\
\mathcal{R}_{\text{kps}}(c,o,A) &:= K\pi\mathcal{R}_{\text{cam}}(c,o,A)
\end{aligned}
\end{equation}
where $\mathbf{1}$ is a vector whose elements are all equal to $1$.

To define $\mathcal{R}_{\text{mask}}$, we need an extra rasterization step, which projects the object faces $F$ to the image frame. A rasterization function, $Raster(\cdot)$, can be defined using the standard method in~\cite{marschner2015fundamentals}, which assumes that if multiple faces are present, only the frontmost one is drawn at each pixel. Kato et al.~\cite{kato2018neural} also show how to obtain an approximate gradient for the rasterization function. Relying on \cite{marschner2015fundamentals} and \cite{kato2018neural} for $Raster(\cdot)$, we can define the mask projection model:
\begin{equation}
\label{eq:mask_proj}
\mathcal{R}_{\text{mask}}(c,o) := Raster\prl{ \mathcal{R}_{\text{cam}}(c,o,I), F }
\end{equation}

Now that the projection models~\eqref{eq:projection_model} and hence the loss functions~\eqref{eq:loss_functions} have been well defined, the final step needed to perform the optimization in~\eqref{eq:problem} is to derive their gradients. We assume that the connectivity $F$ of the object mesh is fixed and the mesh is deformed only by changing the locations of the vertices $V$. We use the results of~\cite{kato2018neural} for the gradient $\nabla_V Raster(V,F)$. Since $\mathcal{R}_{\text{mask}}$ is a function of $\mathcal{R}_{\text{cam}}$ according to~\eqref{eq:mask_proj}, we only need to derive the following:
\begin{equation}
\begin{aligned}
&\nabla_{\hat{s}} \mathcal{L}_{\text{mask}}(s,\hat{s}),\; &&\nabla_{\hat{y}} \mathcal{L}_{\text{kps}}(y,\hat{y}),\\
&\nabla_c \mathcal{R}_{\text{kps}}(c,o,A),\;&&\nabla_o \mathcal{R}_{\text{kps}}(c,o,A).
\end{aligned}
\end{equation}
Our results are summarized in the following propositions.

\begin{proposition}
\label{prop:loss_grad}
The gradients of the loss functions $\mathcal{L}_{\text{mask}}(s,\hat{s})$ and $\mathcal{L}_{\text{kps}}(y,\hat{y})$ in~\eqref{eq:loss_functions} with respect to the estimated mask $\hat{s} \in \{0,1\}^{W \times H}$ and keypoint pixel coordinates $\hat{y} \in \mathbb{R}^{2 \times K}$ are:
\begin{align}
\nabla_{\hat{y}} \mathcal{L}_{\text{kps}}(y,\hat{y}) & = 2\,(\hat{y} \cdot vis(\hat{y}) - y)\,vis(\hat{y})^T\\
\nabla_{\hat{s}} \mathcal{L}_{\text{mask}}(s,\hat{s}) & = -\frac{1}{U(s,\hat{s})}\cdot s + \frac{I(s,\hat{s})}{U^2(s,\hat{s})}\cdot(\mathbf{1}\mathbf{1}^T-s)\notag
\end{align}
where $I(s,\hat{s}) := \|s \odot \hat{s}\|_1$ and $U(s,\hat{s}) := \|s + \hat{s} - s \odot \hat{s}\|_1$.
\end{proposition}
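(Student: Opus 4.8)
The plan is to reduce each identity to a one-line differentiation once the correct variable is isolated, and to reserve the real care for justifying differentiability of the nominally discrete quantities.

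For the keypoint loss I would first note that $vis(\hat{y})$ is a binary selection matrix that is piecewise constant in $\hat{y}$ (it changes only when a keypoint crosses a visibility boundary), so on the complement of that measure-zero set we may treat $V := vis(\hat{y})$ as a fixed matrix. Then $\mathcal{L}_{\text{kps}}(y,\hat{y}) = \|\hat{y}V - y\|_F^2 = \tr\prl{(\hat{y}V - y)^T(\hat{y}V - y)}$, and the standard matrix-calculus identity $\nabla_X \tr\prl{(XV - y)^T(XV - y)} = 2(XV - y)V^T$ applied with $X = \hat{y}$ gives exactly $2\prl{\hat{y}\cdot vis(\hat{y}) - y}vis(\hat{y})^T$.

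For the mask loss I would work entrywise. Since every entry of $s$, $\hat{s}$, and of $s + \hat{s} - s\odot\hat{s}$ is nonnegative, the $\ell_1$ norms are just sums: $I(s,\hat{s}) = \sum_{w,h} s_{wh}\hat{s}_{wh}$ and $U(s,\hat{s}) = \sum_{w,h}\prl{s_{wh} + \hat{s}_{wh} - s_{wh}\hat{s}_{wh}}$. Hence $\partial I/\partial \hat{s}_{wh} = s_{wh}$ and $\partial U/\partial \hat{s}_{wh} = 1 - s_{wh}$, i.e.\ $\nabla_{\hat{s}} I = s$ and $\nabla_{\hat{s}} U = \mathbf{1}\mathbf{1}^T - s$. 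As $\mathcal{L}_{\text{mask}} = -I/U$ and $U > 0$ whenever the masks are not both identically zero, the quotient rule yields $\nabla_{\hat{s}}\mathcal{L}_{\text{mask}} = -\tfrac{1}{U}\nabla_{\hat{s}}I + \tfrac{I}{U^2}\nabla_{\hat{s}}U$, which is the stated formula.

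The only genuine obstacle is conceptual rather than computational: as written $s,\hat{s} \in \{0,1\}^{W\times H}$, so the loss lives on a discrete set and ``gradient'' is a priori undefined. The resolution, which I would state explicitly, is that the renderer of~\cite{kato2018neural} returns a relaxed mask $\hat{s} \in [0,1]^{W\times H}$, so $\mathcal{L}_{\text{mask}}$ is a bona fide smooth function of the continuous variable $\hat{s}$ on the open region $\{U(s,\hat{s}) > 0\}$ -- with the observed mask $s$ held fixed -- and the formulas above are its ordinary gradient there; the analogous caveat for $\mathcal{L}_{\text{kps}}$ is the piecewise-constant $vis(\cdot)$ noted above, with the downstream rasterization and visibility maps differentiated only approximately, again following~\cite{kato2018neural}. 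Everything else is the trace identity and the quotient rule.
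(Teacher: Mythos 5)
Your proposal is correct and follows essentially the same route as the paper's own argument: treat $vis(\hat{y})$ as locally constant and differentiate the trace form to get $2(\hat{y}\,vis(\hat{y})-y)\,vis(\hat{y})^T$, and write $\mathcal{L}_{\text{mask}}=-I/U$ with $\nabla_{\hat{s}}I=s$, $\nabla_{\hat{s}}U=\mathbf{1}\mathbf{1}^T-s$ and apply the quotient rule. Your added remarks on relaxing $\hat{s}$ to $[0,1]^{W\times H}$ via the differentiable renderer and on the piecewise-constant visibility map are sensible clarifications but do not change the substance of the argument.
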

\iffalse
\begin{proof}
We assume that $G := vis(\hat{y})$ remains constant for an infinitesimal change in $\hat{y}$. Then, 
\begin{equation}
\nabla_{\hat{y}} \mathcal{L}_{\text{kps}}(y,\hat{y}) = 2\nabla_{\hat{y}} \tr\prl{(y-\hat{y}G)^T(y-\hat{y}G)}
\end{equation}
Rewriting $\mathcal{L}_{\text{mask}}(s,\hat{s})$ as $- I(s,\hat{s}) / U(s,\hat{s})$ leads to:
\begin{align*}
\nabla_{\hat{s}}\brl{\frac{I(s,\hat{s})}{U(s,\hat{s})}} &= \frac{\nabla_{\hat{s}}I(s,\hat{s})U(s,\hat{s}) - I(s,\hat{s})\nabla_{\hat{s}}U(s,\hat{s})}{U^2(s,\hat{s})} \\
\nabla_{\hat{s}}I(s,\hat{s}) &= s, \; \nabla_{\hat{s}}U(s,\hat{s}) = \mathbf{1}\mathbf{1}^T - s\qedhere
\end{align*}
\end{proof}
\fi
\begin{proposition}
\label{prop:obs_grad}
Let $y^{(i)} = \mathcal{R}_{\text{kps}}(c,o,I) \in \mathbb{R}^2$ be the pixel coordinates of the $i$-th vertex $v_i := Ve_i$ of an object with pose $(R_o,p_o) \in SE(3)$ obtained by projecting $v_i$ onto the image plane of a camera with pose $(R_c,p_c) \in SE(3)$, calibration matrix $K\in \mathbb{R}^{2\times 3}$. Let $\theta_c,\theta_o \in \mathbb{R}^3$ be the axis-angle representations of $R_c$ and $R_o$, respectively, so that $R_c = \exp(\myhat{\theta_c})$ and $R_o = \exp(\myhat{\theta_o})$ and $\myhat{\cdot}$ is the hat map. Then, the derivative of $y^{(i)}$ with respect to $\alpha \in \{\theta_c, p_c, \theta_o, p_o, v_i\}$ is:
\begin{equation}
\frac{\partial y^{(i)}}{\partial \alpha} = K \frac{\partial \pi}{\partial x}(\gamma) \frac{\partial \gamma}{\partial \alpha}
\end{equation}
where:
\begin{align}
\frac{\partial \pi}{\partial x}(x) &= \frac{1}{x_3}\begin{bmatrix}1&0&-x_1/x_3\\0&1&-x_2/x_3\\0&0&0\end{bmatrix}\notag\\
\gamma &= R_c^T(R_ov_i + p_o-p_c)\\
\frac{\partial \gamma}{\partial p_c} &= -R_c^T \quad \frac{\partial \gamma}{\partial p_o} = R_c^T \quad \frac{\partial \gamma}{\partial v_i} = R_c^TR_o\notag\\
\frac{\partial \gamma}{\partial \theta_c} &=R_c^T \myhat{(R_ov_i + p_o-p_c)}J_{rSO(3)}(-\theta_c)\notag\\
\frac{\partial \gamma}{\partial \theta_o}& = -R_c^T R_o\myhat{{v}_i} J_{rSO(3)}(\theta_o)\notag
\end{align}
and $J_{rSO(3)}(\theta)$ is the right Jacobian of $SO(3)$, which is necessary because the gradient needs to be projected from the tangent space to the $SO(3)$ manifold~\cite[Ch. 10.6]{right_jacobian_so3}, and can be computed in closed form:
\begin{equation}
J_{rSO(3)}(\theta) = I_3 - \frac{1-\cos\|\theta\|}{\|\theta\|^2}\myhat{\theta} + \frac{\|\theta\|-\sin\|\theta\|}{\|\theta\|^3}\myhat{\theta}^2.
\end{equation}
\end{proposition}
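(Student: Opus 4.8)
The plan is to derive the entire statement from a single chain-rule expansion of the composition $y^{(i)} = K\,\pi(\gamma)$ with $\gamma = \mathcal{R}_{\text{cam}}(c,o,I)e_i = R_c^T(R_o v_i + p_o - p_c)$, as defined in \eqref{eq:kps_proj}. Since $K$ does not depend on any parameter $\alpha$, we get $\partial y^{(i)}/\partial\alpha = K\,(\partial\pi/\partial x)(\gamma)\,(\partial\gamma/\partial\alpha)$, so only two ingredients remain: the Jacobian of $\pi$ and the Jacobian of $\gamma$ with respect to each of $\theta_c,p_c,\theta_o,p_o,v_i$. The first is elementary: writing $\pi(x) = [\,x_1/x_3,\ x_2/x_3,\ 1\,]^T$ and differentiating entrywise, $\partial(x_j/x_3)/\partial x_j = 1/x_3$ and $\partial(x_j/x_3)/\partial x_3 = -x_j/x_3^2$ for $j=1,2$ while the third component is constant, which assembles into the stated $\partial\pi/\partial x$. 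The three Euclidean derivatives are equally direct, because $\gamma = R_c^T R_o v_i + R_c^T p_o - R_c^T p_c$ is affine in $v_i$, $p_o$ and $p_c$; reading off the coefficients gives $\partial\gamma/\partial v_i = R_c^T R_o$, $\partial\gamma/\partial p_o = R_c^T$, and $\partial\gamma/\partial p_c = -R_c^T$.

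The step that requires care is the pair of rotational derivatives, since $\theta_c$ and $\theta_o$ are axis-angle coordinates on $SO(3)$: the derivative must be taken in the tangent space and transported back to the group, which is precisely where the right Jacobian enters. I would start from the standard first-order perturbation identity $\exp(\myhat{\theta+\delta}) = \exp(\myhat{\theta})\exp\!\prl{\myhat{J_{rSO(3)}(\theta)\,\delta}} + O(\|\delta\|^2)$, together with $\exp(\myhat{w}) = I_3 + \myhat{w} + O(\|w\|^2)$ and the cross-product identity $\myhat{w}\,a = -\myhat{a}\,w$. Applying this to $R_o v_i = \exp(\myhat{\theta_o})v_i$ gives $\exp(\myhat{\theta_o+\delta})v_i = R_o v_i - R_o\,\myhat{v_i}\,J_{rSO(3)}(\theta_o)\,\delta + O(\|\delta\|^2)$, and left-multiplying by the constant $R_c^T$ yields $\partial\gamma/\partial\theta_o = -R_c^T R_o\,\myhat{v_i}\,J_{rSO(3)}(\theta_o)$. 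For $\theta_c$ I would use $R_c^T = \exp(-\myhat{\theta_c}) = \exp(\myhat{-\theta_c})$ and apply the same identity with perturbation variable $-\theta_c$ to the constant vector $b := R_o v_i + p_o - p_c$; the extra sign from $\partial(-\theta_c)/\partial\theta_c = -I_3$ flips one minus and produces $\partial\gamma/\partial\theta_c = R_c^T\,\myhat{b}\,J_{rSO(3)}(-\theta_c)$, as claimed.

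Finally, for the closed form of $J_{rSO(3)}$ I would use its series representation $J_{rSO(3)}(\theta) = \int_0^1 \exp(-s\myhat{\theta})\,ds = \sum_{n\ge0}\frac{(-1)^n}{(n+1)!}\myhat{\theta}^n$, split it into odd and even powers of $\myhat{\theta}$, and reduce using $\myhat{\theta}^3 = -\|\theta\|^2\myhat{\theta}$; the odd-power part collapses to $-\frac{1-\cos\|\theta\|}{\|\theta\|^2}\myhat{\theta}$ and the even-power part to $\frac{\|\theta\|-\sin\|\theta\|}{\|\theta\|^3}\myhat{\theta}^2$, giving the displayed formula — this is exactly the result cited from \cite[Ch.~10.6]{right_jacobian_so3} and needs no separate argument. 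Substituting $\partial\pi/\partial x$ and the five $\partial\gamma/\partial\alpha$ expressions into the chain rule then completes the proof. I expect the only real obstacle to be convention bookkeeping: the left-versus-right perturbation choice on $SO(3)$ (which fixes that the right Jacobian $J_{rSO(3)}$, and not the left one, appears, and on which side of the product it sits), and the sign flip induced by the transpose $R_c^T$ in the $\theta_c$ derivation; once those are pinned down, the rest is routine differentiation.
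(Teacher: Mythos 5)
Your proposal is correct and follows essentially the same route as the paper's proof: chain rule on $y^{(i)} = K\pi(\gamma)$, direct affine differentiation for $p_c,p_o,v_i$, and the first-order right-Jacobian perturbation identity $\exp(\myhat{(\theta+\delta\theta)}) \approx \exp(\myhat{\theta})\exp(\myhat{(J_{rSO(3)}(\theta)\delta\theta)})$ for the rotational derivatives. You simply carry out a bit more detail than the paper — the explicit sign bookkeeping for $\theta_c$ (which the paper leaves as "same approach") and the series derivation of the closed-form $J_{rSO(3)}$ (which the paper cites) — and both are done correctly.
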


\begin{proof}
By definition~\eqref{eq:kps_proj}, $y^{(i)} = K\pi(\gamma)$ so most steps follow by the chain rule. We only discuss the relationship between the axis-angle vectors $\theta_c$, $\theta_o$ and the orientations $R_c$, $R_o$. Any rotation matrix $R\in SO(3)$ can be associated with a vector $\theta\in\mathbb{R}^3$ specifying it as a rotation about a fixed axis $\frac{\theta}{\|\theta\|_2}$ through an angle $\|\theta\|_2$. The axis-angle representation $\theta$ is related to $R$ through the exponential and logarithm maps:
\begin{align*}
&\scaleMathLine{R = \exp(\myhat{\theta}) = I + \left(\frac{\sin\|\theta\|}{\|\theta\|}\right)\myhat{\theta} + \left(\frac{1-\cos\|\theta\|}{\|\theta\|^2}\right)\myhat{\theta}^2} \\
&\myhat{\theta} = \log(R) = \frac{\|\theta\|}{2\sin\|\theta\|}(R-R^T)
\end{align*}
%where the hat map is defined as
%\begin{equation}
%\myhat{\theta} = \begin{bmatrix}0&-\theta_3&\theta_2\\\theta_3&0&-\theta_1\\-\theta_2&\theta_1&0\end{bmatrix} \in so(3)
%\end{equation}
See~\cite{invitation_3D_vision} and \cite{right_jacobian_so3} for details. Consider the derivative of $\gamma$ with respect to $\theta_o$. The right Jacobian of $SO(3)$ satisfies the following for small $\delta \theta$:
\[
\exp(\myhat{(\theta + \delta\theta)}) \approx \exp(\myhat{\theta}) \exp(\myhat{(J_{rSO(3)}(\theta)\delta\theta)})
\]
Using this and $R_o = \exp(\myhat{{\theta}_o})$, we can compute:
\begin{equation}
\begin{aligned}
\frac{\partial \gamma}{\partial \theta_o} &= R_c^T\frac{\partial \exp(\myhat{\theta_o})v_i}{\partial \theta_o}\\
&= R_c^TR_o \frac{\partial }{\partial \delta \theta_o} \myhat{(J_{rSO(3)}(\theta_o)\delta_o\theta)}v_i\\
&= - R_c^TR_o \myhat{v}_i J_{rSO(3)}(\theta_o) \frac{\partial \delta \theta_o}{\partial \delta \theta_o}
\end{aligned}
\end{equation}
The derivative of $\gamma$ with respect to $\theta_c$ can be obtained using the same approach.
\end{proof}
In conclusion, we derived explicit definitions for the observation models $\mathcal{R}_{\text{kps}}$, $\mathcal{R}_{\text{mask}}$, the loss functions $\mathcal{L}_{\text{mask}}$, $\mathcal{L}_{\text{kps}}$, and their gradients directly taking the $SO(3)$ constraints into account via the axis-angle parameterization. As a result, we can treat~\eqref{eq:problem} as an unconstrained optimization problem and solve it using gradient descent. The explicit gradient equations in Prop.~\ref{prop:obs_grad} allow solving an object mapping-only problem by optimizing with respect to $\mathcal{O}$, a camera localization-only problem by optimizing with respect to $\mathcal{C}$, or a simultaneous localization and mapping problem.

%%%%%%%%%%%%%%%%%%%%%%%%%%%%%%%%%%%%%%%%%%%%%%%%%%%%%%%%%%
\subsection{Optimization Initialization}
\label{sec:approach_details}
We implemented the localization and mapping tasks separately. In the localization task, we initialize the camera pose using inertial odometry obtained from integration of IMU measurements~\cite{msckf}. The camera pose is optimized sequentially between every two images via~\eqref{eq:problem}, leading to an object-level visual-inertial odometry algorithm.

To initialize the object model in the mapping task, we collect high-quality keypoints (according to $q_{lt}$ defined in Sec.~\ref{sec:approach_perception}) from multiple frames until an object track is lost. The 3-D positions of these keypoints are estimated by optimizing $\mathcal{L}_{\text{kps}}$ only using the Levenberg-Marquardt algorithm. Using a predefined category-level mesh model(mean model) with known keypoints, we apply the Kabsch algorithm \cite{kabsch1976solution} to initialize the object pose (i.e., the transformation from the detected 3-D keypoints to the category-level model keypoints). 
%In case of low-quality keypoint detections, we also train a category-specific mesh reconstruction model~\cite{kanazawa2018learning} to predict an object's mesh shape and pose from a single image. By comparing the silhouette of the predicted object mesh with the observed segmentation masks, we can choose a valid prediction for initialization. More importantly, this model can capture some instance-specific features to provides better shape initialization. 
After initialization, we take two steps to optimize the object states. First, we fix the mesh vertices and optimize the pose based on the combined loss function in~\eqref{eq:problem}. Next, we fix the object pose, and optimize the mesh vertices using only the mask loss because the keypoint loss affects only few vertices. To improve the deformation optimization and obtain a smooth mesh model, we add regularization using the mean mesh curvature. The curvature is computed using a discretization of the continuous Laplace-Beltrami operator~\cite{pinkall1993computing,sorkine2006differential}. Constraints from symmetric object categories can be enforced by directly defining the mesh shape model to be symmetric.

\begin{figure}[t]
  \centering
  \includegraphics[width=0.51\linewidth,trim=10mm 0mm 0mm 10mm, clip]{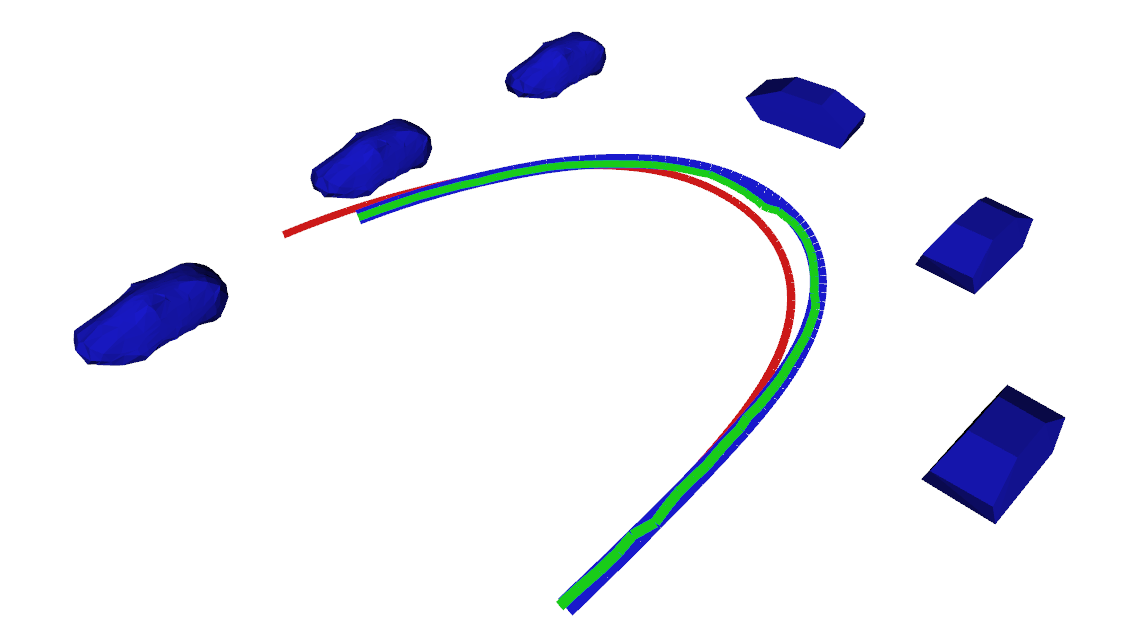}
  \includegraphics[width=0.47\linewidth,trim=0mm 0mm 10mm 10mm, clip]{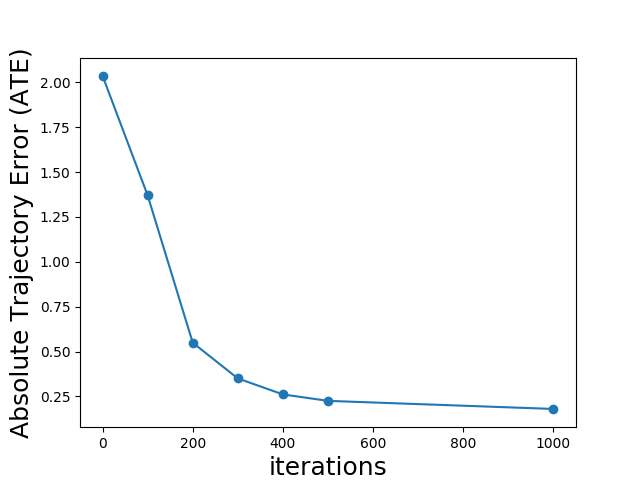}
  \caption{Left: Localization results from a simulated dataset, showing car poses (blue), the ground truth camera trajectory (blue), the inertial odometry used for initialization (red), and the optimized camera trajectory (green). Right: Change of Absolute Trajectory Error versus number of optimization iterations.}
  \label{fig:loc_sim}
\end{figure}

\section{Experiments}
\label{sec:experiments}

\begin{figure}[t]
  \centering
  \includegraphics[width=\linewidth,valign=t]{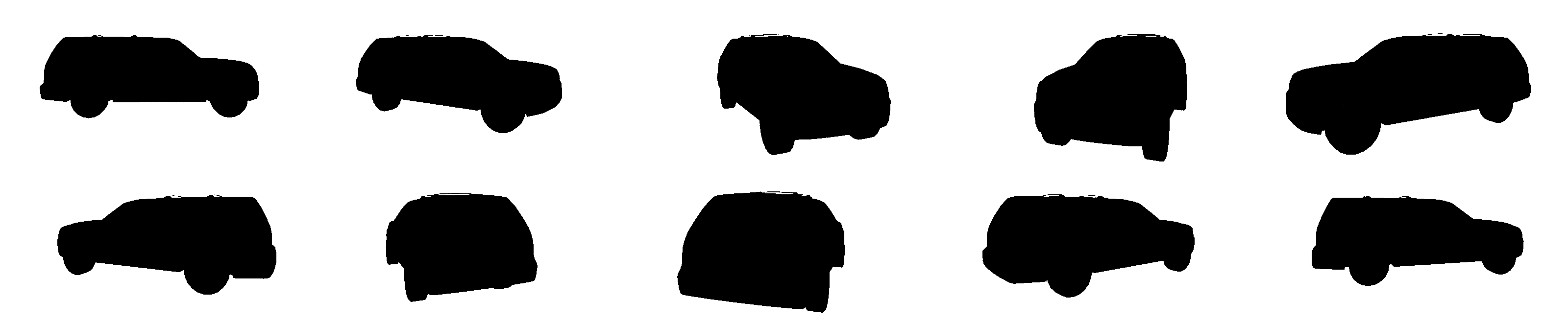}
  \caption{Views used to evaluate the object-level mapping approach in simulation.}
  \label{fig:map_sim_viewpoints}
\end{figure}

\begin{figure}[t]
  \centering
  \includegraphics[width=0.9\linewidth,trim=0mm 0mm 0mm 0mm, clip]{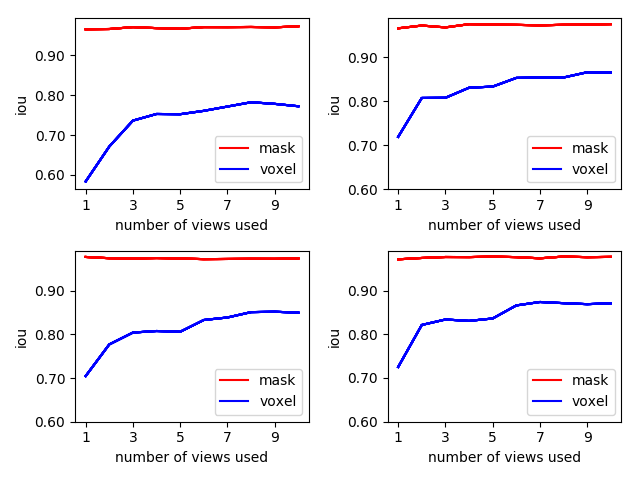}
  \caption{Mask and 3-D voxel intersection over union (IoU) results obtained with different numbers of object views for four different object instances (see Fig.~\ref{fig:map_sim}).}
  % Notice that IoU of mask only take used masks into account.
  \label{fig:map_sim_stats}
\end{figure}

% Adjust box alignment
\begin{figure*}[t]
  \centering
  \includegraphics[width=0.8\linewidth]{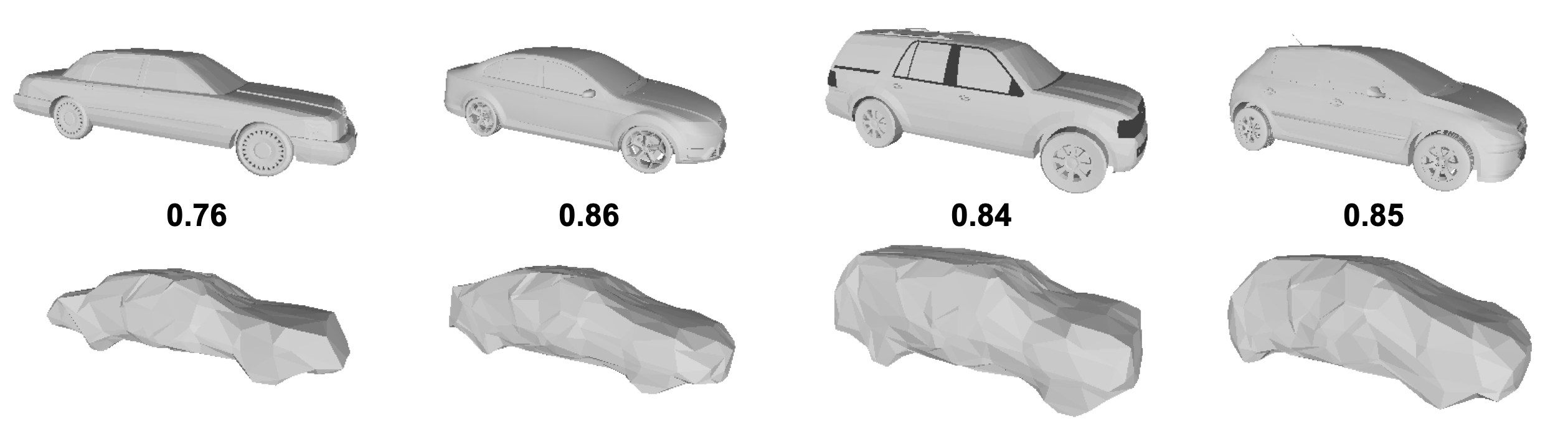}
  \caption{Qualitative comparison between estimated car shapes (bottom row) obtained from a simulation sequence and ground truth object meshes (top row). The numbers indicate 3D IoU.}
  \label{fig:map_sim}
\end{figure*}

% Adjust box alignment
\begin{figure*}[t]
  \centering
  \includegraphics[width=\linewidth,trim=0mm 0mm 0mm 5mm, clip]{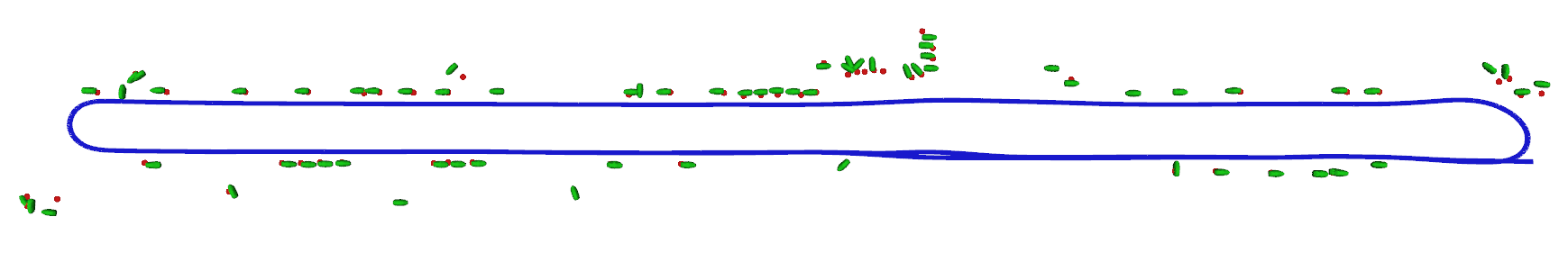}
  \caption{Qualitative results showing the accuracy of the estimated car positions (green) on sequence 06 of the KITTI odometry dataset in comparison with hand-labeled ground truth (red) obtained from~\cite{atanasov2016localization}. The camera trajectory (blue) is shown as well.}
  \label{fig:loc_KITTI}
\end{figure*}

% Adjust box alignment
\begin{figure*}[t]
  \centering
  \includegraphics[width=0.326\linewidth,valign=t,trim=0mm 0mm 0mm 40mm, clip]{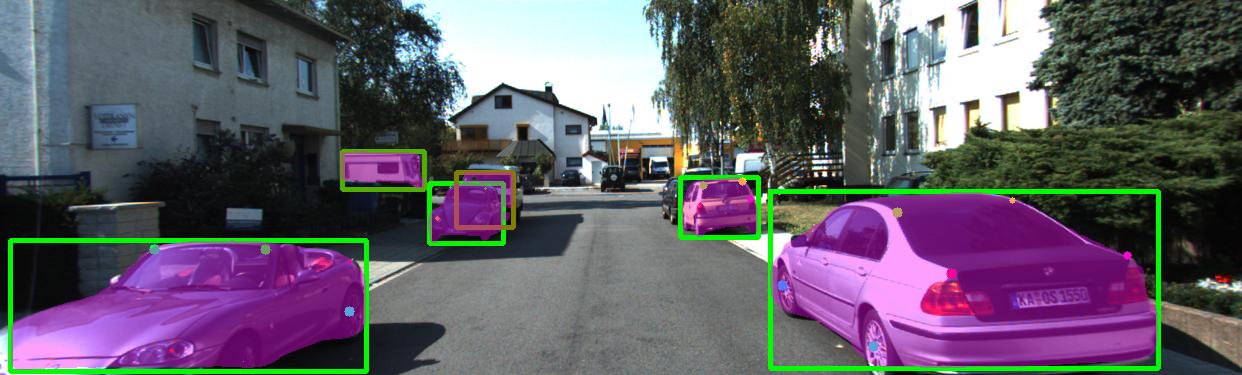}
  \includegraphics[width=0.326\linewidth,valign=t,trim=0mm 0mm 0mm 40mm, clip]{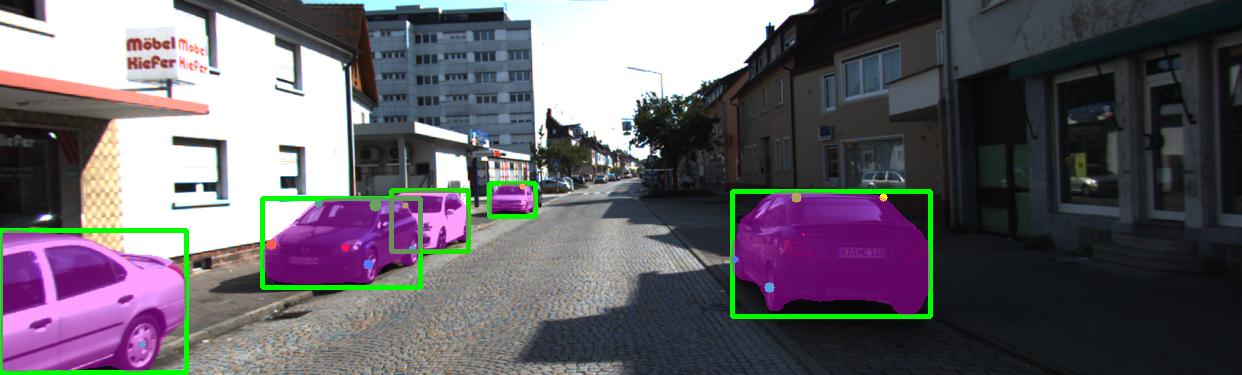}
  \includegraphics[width=0.326\linewidth,valign=t,trim=0mm 0mm 0mm 40mm, clip]{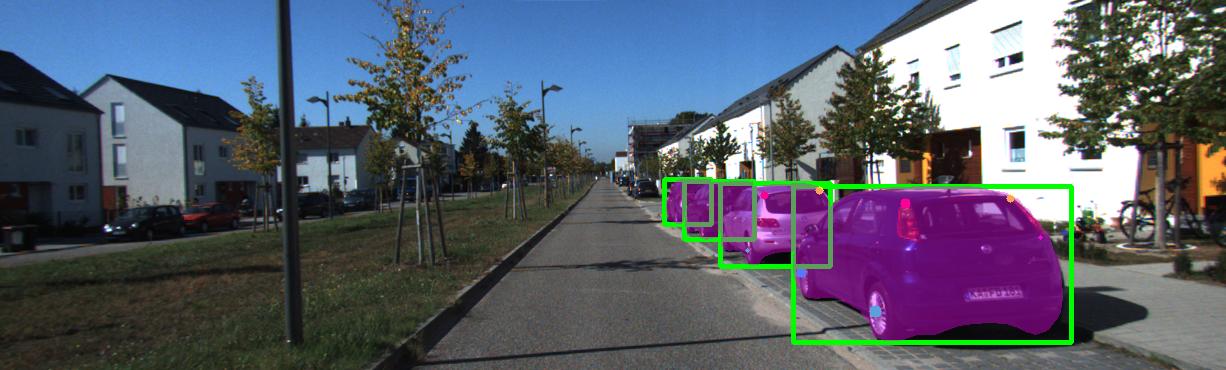}
  \includegraphics[width=0.326\linewidth,valign=t,trim=0mm 0mm 0mm 40mm, clip]{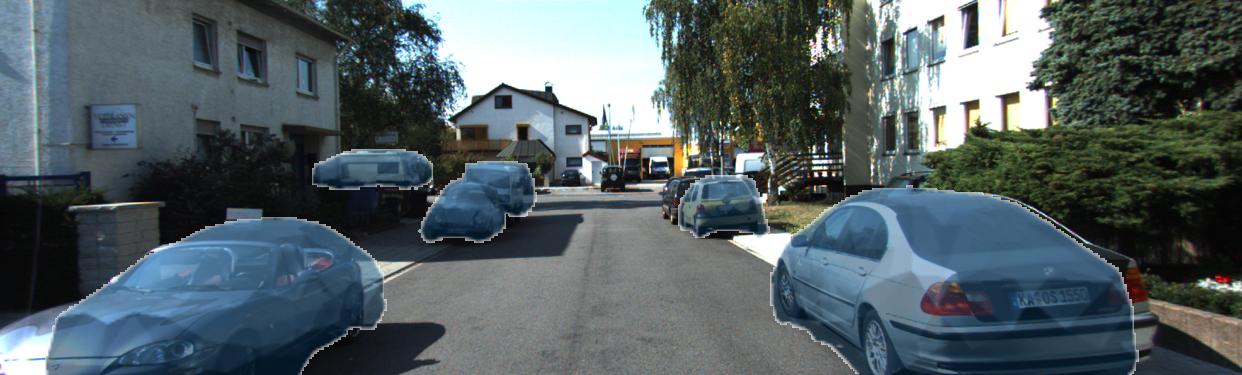}
  \includegraphics[width=0.326\linewidth,valign=t,trim=0mm 0mm 0mm 40mm, clip]{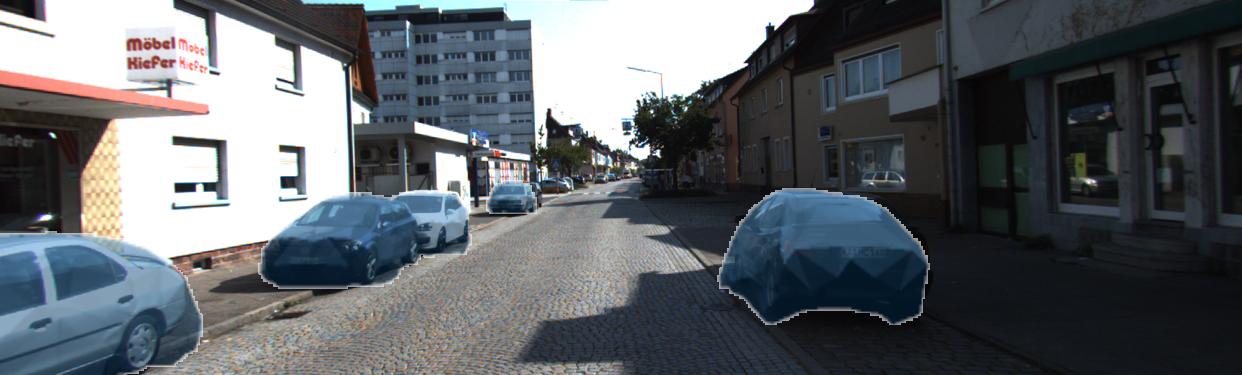}
  \includegraphics[width=0.326\linewidth,valign=t,trim=0mm 0mm 0mm 40mm, clip]{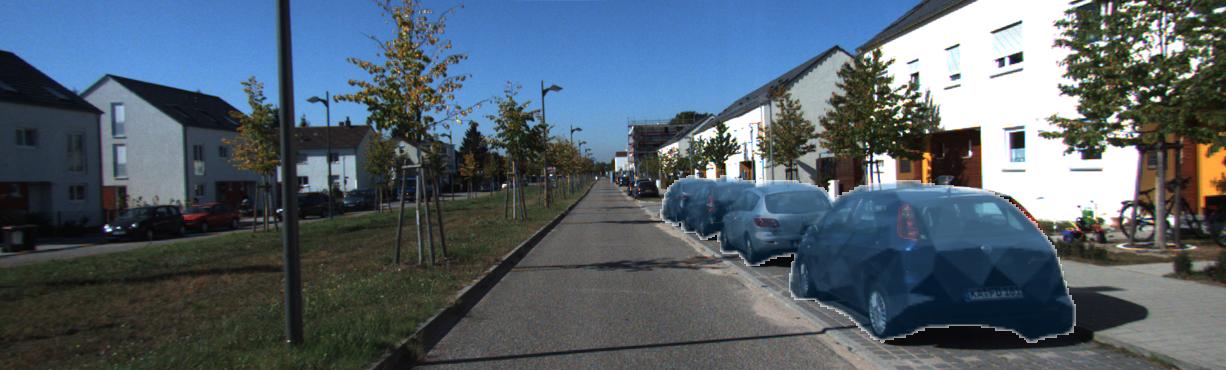}
  \caption{Top: the semantic observations. Bottom: the projection of reconstructed mesh models.}
  \label{fig:map_KITTI}
\end{figure*}

\begin{figure}[t]
  \centering
  \includegraphics[width=0.32\linewidth,trim=0mm 0mm 0mm 0mm, clip]{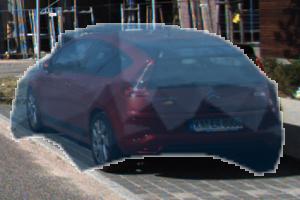}
  \includegraphics[width=0.32\linewidth,trim=0mm 0mm 0mm 0mm, clip]{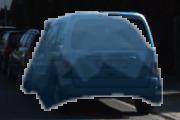}
  \includegraphics[width=0.32\linewidth,trim=0mm 0mm 0mm 0mm, clip]{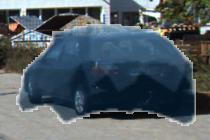}
  \includegraphics[width=0.32\linewidth,trim=0mm 0mm 0mm 0mm, clip]{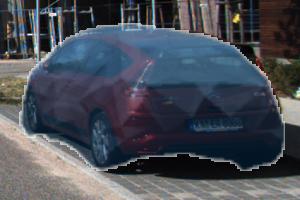}
  \includegraphics[width=0.32\linewidth,trim=0mm 0mm 0mm 0mm, clip]{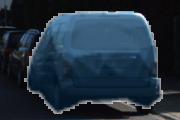}
  \includegraphics[width=0.32\linewidth,trim=0mm 0mm 0mm 0mm, clip]{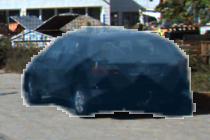}

  \caption{Top: category-level model before shape optimization. Bottom: instance-level model after shape optimization.}
  \label{fig:car_KITTI_3D}
\end{figure}

\begin{figure}[t]
  \centering
  \includegraphics[width=0.47\linewidth,valign=h,trim=200mm 0mm 0mm 30mm, clip]{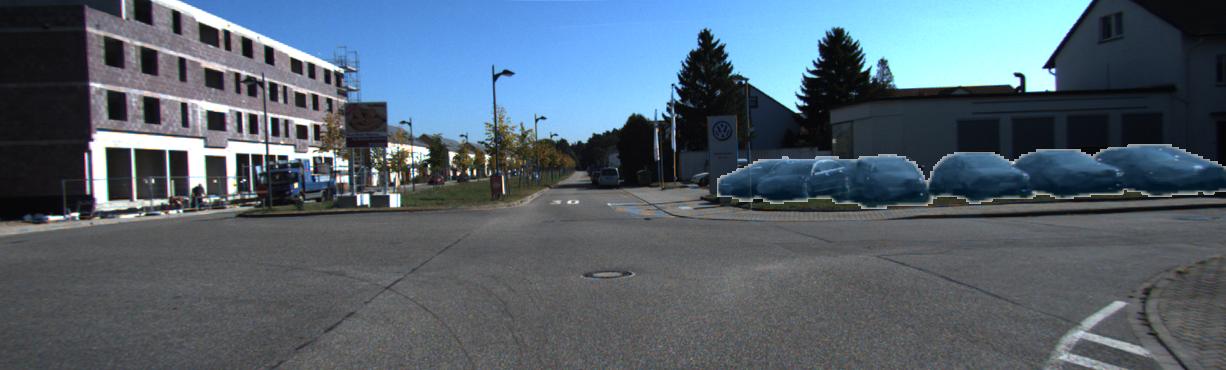}
  \includegraphics[width=0.47\linewidth,valign=h]{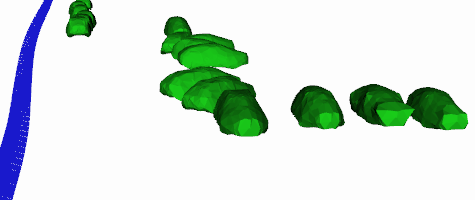}
  \includegraphics[width=0.47\linewidth,valign=h,trim=200mm 0mm 0mm 30mm, clip]{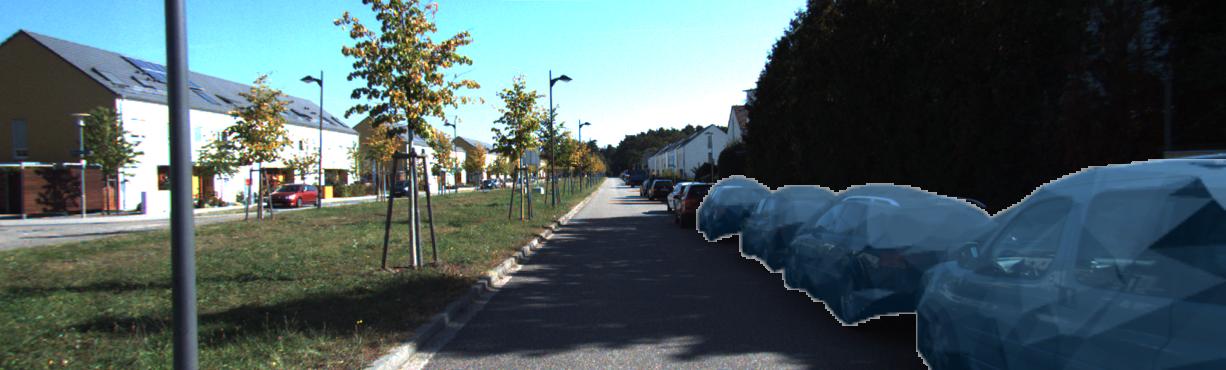}
  \includegraphics[width=0.47\linewidth,valign=h]{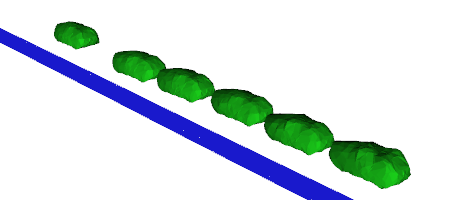}
  \caption{Left: 2D observation of mesh models. Right: corresponding 3D configuration. Trajectory in blue.}
  \label{fig:map_KITTI_3D}
\end{figure}

We evaluated the ability of the proposed localization and mapping technique to optimize the camera trajectory and reconstruct object poses and shapes using both simulated and real data. Our experiments used images from a monocular camera and inertial odometry information and focused on detecting, localizing and reconstructing cars object because 1) KITTI dataset with multiple cars is widely used and 2) there is pre-trained object detector and keypoint extractor for cars. We represented cars using a symmetric mesh model with 642 vertices and 1280 faces.

\subsection{Simulation Dataset}
To model the real mechanism of IMU, we chose a sub-sequence IMU measurement and associated groundtruth pose from synchronized KITTI odometry dataset. We collected camera images following the groundtruth pose in a simulated Gazebo environment populated with car mesh models, so that we simulated a real camera-IMU sensor (see Fig.~\ref{fig:loc_sim}). The car models were annotated with keypoints and both the car surface and the keypoints were colored in contrasting colors to simplify the semantic segmentation and keypoint detection tasks. The simulated experiments used ground-truth data association among the observations. We evaluated both the localization and the mapping tasks.

For the localization task, we used a sequence with 70 frames and synchronized IMU measurements and 6 known cars were placed around. We initialized the estimation by predicting the transformation between two camera poses based on the IMU odometry. Then, we optimized the predicted camera pose by solving problem~\eqref{eq:problem} and used the IMU to predict the next pose. An example camera trajectory and the associated localization results are shown in Fig.~\ref{fig:loc_sim}. We can see that our optimization successfully reduced the error accumulated from IMU integration.

The mapping performance was evaluated on a sequence of images obtained from different views of a single object (see Fig.~\ref{fig:map_sim_viewpoints}). The optimization was initialized using a generic category-level car mesh and its vertices were optimized based on the detected keypoints and segmentation masks. The mapping quality is evaluated qualitatively using the Intersection over Union (IoU) ratio between the predicted and groundtruth car masks volumes. In detail, the mask IoU compares the area differences between predicted binary car masks, while the voxel IoU compares the voxelized volume of the predicted and groundtruth car models. Fig.~\ref{fig:map_sim_stats} shows the dependence of the mapping accuracy on the number of different views used. The optimized car meshes are shown in Fig.~\ref{fig:map_sim}. The differences among car models are clearly visible in the reconstructed meshes and their shapes are very close to the corresponding groundtruth shapes. Using only a few views, the optimization process is able to deform the mesh vertices to fit the segmentation masks but not necessarily align the estimated model with the real 3-D shape. As more observations become available, the 3D IoU increases, which makes sense since different views can provide information about additional instance-level characteristics. Based only on 3 views, the IoU reaches over 0.8, while the generic category-level mesh has an average IoU of 0.63 with respect to the different object instances.

\begin{table}[t]
  \caption{2D projection mIoU with respect to object segmentation on three KITTI sequence~\cite{geiger2013vision}}
  \label{tab:2DIoU}
  \centering
  \begin{tabular}{c|c|c|c}
		\multirow{2}{*}{Dataset} & 09\_26 & 09\_26 & 09\_30 \\
		 & 0048 & 0035 & 0020 \\
		\hline
		Frames & 22 & 131 & 1101 \\
		Detected objects & 6 & 28 & 77 \\
		\hline
		Single image mesh prediction~\cite{kanazawa2018learning} & 0.692 & 0.642 & 0.641\\
		With pose estimation & 0.735 & 0.656 & 0.689 \\
		With pose and shape estimation & \textbf{0.778} & \textbf{0.675} & \textbf{0.725}\\
  \end{tabular}
\end{table}

\subsection{KITTI Dataset}
Experiments with real observations were carried out using the KITTI dataset~\cite{geiger2013vision}. We choose three sequences with different lengths. The experiments used the ground-truth camera poses and evaluated only the mapping task. The object detector, semantic segmentation~\cite{massa2018mrcnn} and the keypoint detector~\cite{zhou2018starmap} algorithms used pre-trained weights. 
Semantic observations were collected as described in Sec.~\ref{sec:approach_perception}. The poses and shapes of the detected cars were initialized and optimized as described in Sec.~\ref{sec:approach_details}. Fig.~\ref{fig:loc_KITTI} shows a bird-eye view of the estimated car poses and compares the results with the ground truth car positions provided in~\cite{atanasov2016localization}.
The poses and shapes of 56 out of 62 marked cars were reconstructed, with an average position error across all cars of 1.9 meters. 
Fig.~\ref{fig:map_KITTI} shows some estimated 3-D car mesh models projected back onto the camera images. Fig.~\ref{fig:car_KITTI_3D} compares the differences between the category-level and instance-specific models. Fig.~\ref{fig:map_KITTI_3D} shows the estimated model shapes and poses in 3D. 
Since groundtruth object shapes are not available, we evaluate the quality of shape reconstruction based on the 2D IoU compared with the observed instance segmentation masks. We trained a single-image mesh predictor~\cite{kanazawa2018learning} on car data from the PASCAL3D+ dataset~\cite{xiang_wacv14} and calculated its mean IoU for individual objects over multiple frames. Table \ref{tab:2DIoU} shows that our multi-view optimization method improves the IoU by leveraging semantic information from multiple images. The reconstruction quality on the real dataset is limited by the accuracy of the semantic information because the optimization objective is to align the predicted car shapes with the observed semantic masks and keypoints. The viewpoint changes on the real dataset are smaller, making the reconstruction task harder than in simulation. The pose estimation relies heavily on the keypoint detections, which in some cases are not robust enough. Nevertheless, our approach is able to generate accurate instance-specific mesh models in an environment containing occlusion and different lighting conditions.

\section{Conclusion}
\label{sec:conclusion}

This work demonstrates that object categories, shapes and poses can be recovered from visual semantic observations. The key innovation is the development of differentiable keypoint and segmentation mask projection models that allow object shape to be used for simultaneous semantic mapping and camera pose optimization. In contrast with existing techniques, our method generates accurate instance-level reconstructions of multiple objects, incorporating multi-view semantic information. Future work will extend the mesh reconstruction to multiple object categories and will focus on data association techniques for object re-identification and loop closure. Our ultimate goal is to develop an online SLAM algorithm that unifies semantic, geometric, and inertial measurements to allow rich environment understanding and contextual reasoning.

% including non-rigid objects

%In this work, we have demonstrated that a deformable mesh model can be used to recover the pose and instance-specific shape of objects using semantic observations. Compared with previous method using category-level model, our mapping method can generate instance-level mesh reconstruction to represent more details of the objects. The observation model forms the constraints between camera poses and object states such that it can also help with localization task. 

%In the future work, we will combine this with geometric feature points observations and IMU measurement to implement a complete SLAM system. Also we want to extend the mesh reconstruction to multiple categories of objects and try the application of re-identification of objects for loop closure detection.

%\appendices
%\input{tex/Appendix.tex}

%==================================================================%
% References
{\small
\bibliographystyle{cls/IEEEtran}
\bibliography{bib/ref.bib}
}
\end{document}